\DeclareMathOperator*{\argmax}{arg\,max}
\DeclareMathOperator*{\argmin}{arg\,min}
\newcommand{\uni}{\cup} 
\newcommand{\Prp}{\ensuremath{\mathcal{P}}}
\newtheorem{definition}{Definition}
\newtheorem{lemma}{Lemma}
\newtheorem{theorem}{Theorem}
\newtheorem{observation}{Observation}
\newtheorem{example}{Example}
\begin{document}
%
\title{Revising Incompletely Specified Convex Probabilistic Belief Bases}
\author{Gavin Rens\\
CAIR\thanks{Centre for Artificial Intelligence Research},\\
University of KwaZulu-Natal,\\
School of Mathematics, Statistics and Comp.\ Sci. \\
CSIR Meraka, South Africa\\
Email: gavinrens@gmail.com
\And
Thomas Meyer\\
CAIR,\\
University of Cape Town,\\
Dept.\ of Comp.\ Sci. \\
CSIR Meraka, South Africa\\
Email: tmeyer@cs.uct.ac.za
\And
Giovanni Casini\\
University of Luxembourg,\\
Comp.\ Sci.\ and Communication Research Unit\\
Luxembourg\\
Email: giovanni.casini@uni.lu
}

\maketitle
\begin{abstract}
\begin{quote}
We propose a method for an agent to revise its incomplete probabilistic beliefs when a new piece of propositional information is observed.
In this work, an agent's beliefs are represented by a set of probabilistic formulae -- a belief base.
The method involves determining a \textit{representative} set of `boundary' probability distributions consistent with the current belief base, revising each of these probability distributions and then translating the revised information into a new belief base.
We use a version of Lewis Imaging as the revision operation.
The correctness of the approach is proved.
The expressivity of the belief bases under consideration are rather restricted, but has some applications.
We also discuss methods of belief base revision employing the notion of optimum entropy, and point out some of the benefits and difficulties in those methods.
Both the boundary distribution method and the optimum entropy method are reasonable, yet yield different results.
\end{quote}
\end{abstract}

\noindent
Suppose an agent represents its probabilistic knowledge with a set of statements; every statement says something about the probability of some features the agent is aware of.
Ideally, the agent would want to have enough information to, at least, identify one probability distribution over all the situations (worlds) it deems possible.
However, if the agent could not gather sufficient data or if it was not told or given sufficient information, it would not be able to pinpoint exactly one probability distribution.
An agent with this sort of ignorance, can be thought of as having beliefs compatible with a \textit{set of} distributions.
Now, this agent might need to revise its beliefs when new (non-probabilistic) information is received, even though the agent's beliefs do not characterize a \textit{particular} probability distribution over its current possible worlds.

Several researchers argue that using a single probability distribution requires the agent to make unrealistically precise uncertainty distinctions \citep{gh98,v99a,yl08}.\footnote{See also the references in these cited papers concerning criticisms against traditional probability theory.}
``One widely-used approach to dealing with this has been to consider sets of probability measures as a way of modeling uncertainty,'' \citep{gh98}. However, simply applying standard
probabilistic conditioning to each of the measures/distributions in the set individually and then combining the results is also not recommended.
The framework presented in this paper proposes two ways to go from one `probabilistically incomplete' belief base to another when new information is acquired.

Both belief revision methods presented, essentially follow this process: From the original belief base, determine a relatively small set of belief states / probability distributions `compatible' with the belief base which is, in a sense, representative of the belief base. (We shall use the terms \textit{belief state}, \textit{probability distribution}, \textit{probability function} and \textit{distribution} interchangeably). Then revise every belief state in this representative set. Finally, induce a new, revised belief base from the revised representative set.

We shall present two approaches to determine the representative set of belief states from the current belief base: (i) The approach we focus on involves finding belief states which, in a sense, are at the boundaries of the constraints implied by the belief base. These `boundary belief states' can be thought of as drawing the outline of the convex space of beliefs. This outline is then revised to form a new outline shape, which can be translated into a new belief base. (ii) As a possible alternative approach, the representative set is a \textit{single} belief state which can be imagined to be at the center of the outline of the first approach. This `central' belief state is found by determining the one in the space of beliefs which is least biased or most entropic in terms of information theory \citep{j78,ct91}.

For approach (i) -- where the canonical set is the set of boundary belief states -- we shall prove that the revised canonical set characterizes the set of all belief states which would have resulted from revising all (including interior) belief states compatible with the original belief base.

The relevant background theory and notations are now introduced.

\bigskip
We shall work with classical propositional logic. Let $\Prp$ be the finite set of atomic propositional variables (\textit{atoms}, for short). Formally, a \textit{world} is a unique assignment of truth values to all the atoms in $\Prp$. There are thus $2^n$ conceivable worlds. An agent may consider some non-empty subset $W$ of the conceivable worlds called the possible worlds.
Often, in the exposition of this paper, a world will be referred to by its truth vector. For instance, if the vocabulary is placed in order $\langle q,r\rangle$ and $w_3\Vdash \lnot q\land r$, then $w_3$ may be referred to as $01$.\footnote{$w\Vdash\alpha$ is read `$w$ is a model for/satisfies $\alpha$'.}
Let $L$ be all propositional formulae which can be formed from $\Prp$ and the logical connectives $\land$ and $\lnot$, with $\top$ abbreviating tautology and $\bot$ abbreviating contradiction.

Let $\beta$ be a sentence in $L$. $[\beta]$ denotes the set of $\beta$-worlds, that is, the elements of $W$ satisfying $\beta$.
The worlds satisfying all sentences in a set of sentences $K$ are denoted by $[K]$.

We define the probabilistic language $L^\mathit{prob}= \{(\alpha)\bowtie x\mid \alpha\in L, \bowtie\in\{\leq,=,\geq\}, x\in[0,1]\}$.
Sentences with strict inequalities ($<,>$) are excluded from the language for now. Such sentences are more challenging to deal with and their inclusion is left for future work.
We propose a belief base (BB) to be a consistent (logically satisfiable) subset of $L^\mathit{prob}$. A BB specifies an agent's knowledge.

The basic semantic element of an agent's beliefs is a probability distribution or a \textit{belief state}
\[b=\{(w_1,p_1), (w_2,p_2), \ldots, (w_n,p_n)\},\] where $p_i$ is the probability that $w_i$ is the actual world in which the agent is.
$\sum_{(w,p)\in b}p=1$.
We may also use $c$ to refer to a belief state.
For parsimony, let $b=\langle p_1,\ldots, p_n \rangle$ be the probabilities that belief state $b$ assigns to $w_1,\ldots, w_n$ where $\langle w_1,w_2,w_3,w_4\rangle$ $=$ $\langle 11,10,01,00\rangle$, and $\langle w_1,w_2,\ldots,w_8\rangle$ $=$ $\langle 111,110,\ldots,000\rangle$.
Let $\Pi$ be the set of all belief states over $W$.

$b(\alpha)$ abbreviates $\sum_{w\in W,w\Vdash\alpha}b(w)$.
%
$b$ \textit{satisfies} formula $(\alpha)\bowtie x$ (denoted $b\Vdash(\alpha)\bowtie x$) iff $b(\alpha)\bowtie x$.
If $B$ is a set of formulae, then $b$ \textit{satisfies} $B$ (denoted $b\Vdash B$) iff $\forall \gamma\in B$, $b\Vdash \gamma$.
If $B$ and $B'$ are sets of formulae, then $B$ \textit{entails} $B'$ (denoted $B\models B'$) iff for all $b\in\Pi$, $b\Vdash B'$ whenever $b\Vdash B$.
If $B\models \{\gamma\}$ then we simply write $B\models \gamma$. $B$ is logically equivalent to $B'$ (denoted $B\equiv B'$) iff $B\models B'$ and $B'\models B$.

Instead of an agent's beliefs being represented by a single belief state, a BB $B$ represents a \textit{set} of belief-states:
Let $\Pi^B:=\{b\in\Pi \mid b\Vdash B\}$.
A BB $B$ is \textit{satisfiable} (\textit{consistent}) iff $\Pi^B\neq\emptyset$.

\bigskip
The technique of \textit{Lewis imaging} for the revision of belief states, requires a notion of distance between worlds to be defined.
We use a pseudo-distance measure between worlds, as defined by \citet{lms01} and adopted by \citet{cnss14}.

We add a `faithfulness' condition, which we feel is lacking from the definition of \citet{lms01}: without this condition, a pseudo-distance measure would allow all worlds to have zero distance between them. \citet{b98a} mentions this condition, and we use his terminology: ``faithfulness''.
\begin{definition}
A pseudo-distance function $d~:~W \times W \to \mathbb{Z}$ satisfies the following four conditions: for all worlds $w, w', w'' \in W$,
\begin{enumerate}
\item $d(w, w') \geq 0$ (Non-negativity)
\item $d(w, w) = 0$ (Identity)
\item $d(w, w') = d(w', w)$ (Symmetry)
\item $d(w, w') + d(w', w'') \geq d(w, w'')$ (Triangular Inequality)
\item if $w\neq w'$, then $d(w, w') > 0$ (Faithfulness)
\end{enumerate}
\label{def:pseudo-func}
\end{definition}

Presently, the foundation theory, or paradigm, for studying belief change operations is commonly known as AGM theory \citep{agm85,g88}.
Typically, belief change (in a static world) can be categorized as expansion, revision or contraction, and is performed on a belief set, the set of sentences $K$ closed under logical consequence.
Expansion (denoted $+$) is the logical consequences of $K\uni\{\alpha\}$, where $\alpha$ is new information and $K$ is the current belief set.
Contraction of $\alpha$ is the removal of some sentences until $\alpha$ cannot be inferred from $K$.
It is the reduction of  beliefs.
Revision is when $\alpha$ is (possibly) inconsistent with $K$ and $K$ is (minimally) modified so that the new $K$ remains consistent and entails $\alpha$. In this view, when the new information is consistent with the original beliefs, expansion and revision are equivalent.

The next section presents a generalized imaging method for revising probabilistic belief states.
Then we describe the application of generalized imaging in our main contribution; revising boundary belief states instead of all belief states.
The subsequent section explain another approaches of revising our belief bases, which prepares us for discussions in the rest of the paper. The latter method finds a single representative belief state through maximum entropy inference.
Both the boundary belief state method and the maximum entropy method are reasonable, yet yield different results -- a seeming paradox is thus uncovered.
Then future possible directions of research are discussed.
We end with a section on the related work and the concluding section.

\section{Generalized Imaging}

It is not yet universally agreed what revision means in a probabilistic setting. One school of thought says that probabilistic expansion is equivalent to Bayesian conditioning. This is evidenced by Bayesian conditioning ($\mathsf{BC}$) being defined only when $b(\alpha)\neq0$, thus making $\mathsf{BC}$ expansion equivalent to $\mathsf{BC}$ revision.
In other words,
one could define expansion (restricted revision) to be 
\[
b\:\mathsf{BC}\:\alpha =\{(w,p)\mid w\in W, p= b( w\mid\alpha),b(\alpha) \neq 0\}.
\]

To accommodate cases where $b(\alpha)=0$, that is, where $\alpha$ contradicts the agent's current beliefs and its beliefs need to be revised in the stronger sense, we shall make use of \textit{imaging}. Imaging was introduced by \citet{l76} as a means of revising a probability function. It has also been discussed in the work of, for instance, \citet{g88,dp93,cnss14,rm15b}.
Informally, Lewis's original solution for accommodating contradicting evidence $\alpha$ is to move the probability of each world to its closest, $\alpha$-world. Lewis made the strong assumption that every world has a \textit{unique} closest $\alpha$-world. More general versions of imaging allows worlds to have \textit{several}, equally proximate, closest worlds.

\citet{g88} calls one of his generalizations of Lewis's imaging \textit{general imaging}. Our method is also a generalization. We thus refer to his as \textit{G\"ardenfors's general imaging} and to our method as \textit{generalized imaging} to distinguish them. It should be noted that all three these imaging methods are general revision methods and can be used in place of Bayesian conditioning for expansion. ``Thus imaging is a more general method of describing belief changes than conditionalization,'' \citep[p.~112]{g88}.

Let $\mathit{Min}(\alpha,w,d)$ be the set of $\alpha$-worlds closest to $w$ with respect to pseudo-distance $d$. Formally,
\begin{align*}
&\mathit{Min}(\alpha,w,d):=\\
&\qquad\{w'\in[\alpha]\mid \forall w''\in[\alpha], d(w',w)\leq d(w'',w)\},
\end{align*}
where $d(\cdot)$ is some pseudo-distance measure between worlds (e.g., Hamming or Dalal distance).
\begin{example}
\label{ex:1}
Let the vocabulary be $\{q,r,s\}$.
Let $\alpha$ be $(q\land r)\lor(q\land\lnot r\land s)$. Suppose $d$ is Hamming distance. Then
\begin{align*}
&\mathit{Min}((q\land r)\lor(q\land\lnot r\land s),111,d)=\{111\}\\
&\mathit{Min}((q\land r)\lor(q\land\lnot r\land s),110,d)=\{110\}\\
&\mathit{Min}((q\land r)\lor(q\land\lnot r\land s),101,d)=\{101\}\\
&\mathit{Min}((q\land r)\lor(q\land\lnot r\land s),100,d)=\{110,101\}\\
&\mathit{Min}((q\land r)\lor(q\land\lnot r\land s),011,d)=\{111\}\\
&\mathit{Min}((q\land r)\lor(q\land\lnot r\land s),010,d)=\{110\}\\
&\mathit{Min}((q\land r)\lor(q\land\lnot r\land s),001,d)=\{101\}\\
&\mathit{Min}((q\land r)\lor(q\land\lnot r\land s),000,d)=\{110,101\}
\end{align*}
\hfill$\Box$
\end{example}

\begin{definition}[$\mathsf{GI}$]
\label{def:GI}
Then \emph{generalized imaging} (denoted $\mathsf{GI}$) is defined as
\begin{align*}
&b\:\mathsf{GI}\:\alpha :=\{(w,p)\mid w\in W, p=0 \mbox{ if } w\not\in[\alpha],\\
& \qquad \mbox{else }p=\sum_{\substack{w'\in W\\w\in\mathit{Min}(\alpha,w',d)}}b(w')/|\mathit{Min}(\alpha,w',d)|\}.
\end{align*}
\end{definition}
In words, $b\:\mathsf{GI}\:\alpha$ is the new belief state produced by taking the generalized image of $b$ with respect to $\alpha$.
Notice how the probability mass of non-$\alpha$-worlds is shifted to their closest $\alpha$-worlds. If a non-$\alpha$-world $w^\times$ with probability $p$ has $n$ closest $\alpha$-worlds (equally distant), then each of these closest $\alpha$-worlds gets $p/n$ mass from $w^\times$.

We define $b^\circ_\alpha:=b\circ\alpha$ so that we can write $b^\circ_\alpha(w)$, where $\circ$ is a revision operator.
\begin{example}
\label{ex:2}
Continuing on Example~\ref{ex:1}:
Let $b=\langle0,0.1,0,0.2,0,0.3,0,0.4\rangle$.

$(q\land r)\lor(q\land\lnot r\land s)$ is abbreviated as $\alpha$.

\bigskip
$b^\mathsf{GI}_\alpha(111) = \sum_{\substack{w'\in W\\111\in\mathit{Min}(\alpha,w',d)}}b(w')/|\mathit{Min}(\alpha,w',d)|$ $=$ $b(111)/|\mathit{Min}(\alpha,111,d)| + b(011)/|\mathit{Min}(\alpha,011,d)|$ $=$ $0/1 + 0/1$ $=0$.

\bigskip
$b^\mathsf{GI}_\alpha(110) = \sum_{\substack{w'\in W\\110\in\mathit{Min}(\alpha,w',d)}}b(w')/|\mathit{Min}(\alpha,w',d)|$ $=$ $b(110)/|\mathit{Min}(\alpha,110,d)| + b(100)/|\mathit{Min}(\alpha,100,d)| + b(010)/|\mathit{Min}(\alpha,010,d)| + b(000)/|\mathit{Min}(\alpha,000,d)|$ $=$ $0.1/1 + 0.2/2 + 0.3/1 + 0.4/2$ $=0.7$.

\bigskip
$b^\mathsf{GI}_\alpha(101) = \sum_{\substack{w'\in W\\101\in\mathit{Min}(\alpha,w',d)}}b(w')/|\mathit{Min}(\alpha,w',d)|$ $=$ $b(101)/|\mathit{Min}(\alpha,101,d)| + b(100)/|\mathit{Min}(\alpha,100,d)| + b(001)/|\mathit{Min}(\alpha,001,d)| + b(000)/|\mathit{Min}(\alpha,000,d)|$ $=$ $0/1 + 0.2/2 + 0/1 + 0.4/2$ $=0.3$.

\bigskip
And $b^\mathsf{GI}_\alpha(100) = b^\mathsf{GI}_\alpha(011) = b^\mathsf{GI}_\alpha(010) = b^\mathsf{GI}_\alpha(001) = b^\mathsf{GI}_\alpha(000) = 0$.
\hfill$\Box$
\end{example}

\section{Revision via $\mathsf{GI}$ and boundary belief states }

Perhaps the most obvious way to revise a given belief base (BB) $B$ is to revise every individual belief state in $\Pi^B$ and then induce a new BB from the set of revised belief states.
Formally, given observation $\alpha$, first determine a new belief state $b^\alpha$ for every $b\in \Pi^B$ via the defined revision operation:
\[
\Pi^{B^\alpha}=\{b^\alpha\in\Pi\mid b^\alpha=b\:\mathsf{GI}\:\alpha, \:b\in \Pi^B\}.
\]
If there is more than only a single belief state in $\Pi^B$, then $\Pi^B$ contains an infinite number of belief states.
Then how can one compute $\Pi^{B^\alpha}$? And how would one subsequently determine $B^\alpha$ from $\Pi^{B^\alpha}$?

In the rest of this section we shall present a finite method of determining $\Pi^{B^\alpha}$. What makes this method possible is the insight that $\Pi^B$ can be represented by a finite set of `boundary' belief states -- those belief states which, in a sense, represent the limits or the convex hull of $\Pi^B$. We shall prove that the set of revised boundary belief states defines $\Pi^{B^\alpha}$. Inducing $B^\alpha$ from $\Pi^{B^\alpha}$ is then relatively easy, as will be seen.

Let $W^\mathit{perm}$ be every permutation on the ordering of worlds in $W$. For instance, if $W=\{w_1,w_2,w_3,w_4\}$, then $W^\mathit{perm}=\{\langle w_1,w_2,w_3,w_4\rangle$, $\langle w_1,w_2,w_4,w_3\rangle$, $\langle w_1,w_3,w_2,w_4\rangle$, $\ldots$, $\langle w_4,w_3,w_2,w_1\rangle\}$.
Given an ordering $W^\#\in W^\mathit{perm}$, let $W^\#(i)$ be the $i$-th element of $W^\#$; for instance, $\langle w_4,w_3,w_2,w_1\rangle(2)=w_3$.
Suppose we are given a BB $B$.
We now define a function which, given a permutation of worlds, returns a belief state where worlds earlier in the ordering are assigned maximal probabilities according to the boundary values enforced by $B$.
\begin{definition}
$\mathit{MaxASAP}(B,W^\#)$ is the $b\in\Pi^B$ such that for $i=1,\ldots,|W|$, $\forall b'\in\Pi^B$, if $b'\neq b$, then $\sum_{j=1}^i b(W^\#(j)) \geq \sum_{k=1}^i b'(W^\#(k))$.
\end{definition}
\begin{example}
\label{ex:3}
Suppose the vocabulary is $\{q,r\}$ and $B_1 = \{(q)\geq0.6\}$.
Then, for instance, $\mathit{MaxASAP}(B_1,\langle 01$, $00$, $11$, $10\rangle)$ $=$ $\{(01,0.4)$, $(00,0)$, $(11,0.6)$, $(10,0)\}$ $=$ $\{(11,0.6)$, $(10,0)$, $(01,0.4)$, $(00,0)\}$.
\hfill$\Box$
\end{example}
\begin{definition}
\label{def:bound-bel-stts}
We define the boundary belief states of BB $B$ as the set
\begin{align*}
&\Pi^B_\mathit{bnd} := \{b\in \Pi^B\mid\\
&\qquad W^\#\in W^\mathit{perm}, b=\mathit{MaxASAP}(B,W^\#)\}
\end{align*}
\end{definition}
Note that $|\Pi^B_\mathit{bnd}|\leq|W^\mathit{perm}|$.
\begin{example}
\label{ex:4}
Suppose the vocabulary is $\{q,r\}$ and $B_1 = \{(q)\geq 0.6\}$.
Then
\begin{eqnarray*}
\Pi^{B_1}_\mathit{bnd} &=& \{\{(11,1.0),(10,0.0),(01,0.0),(00,0.0)\},\\ &&\{(11,0.0),(10,1.0),(01,0.0),(00,0.0)\},\\ &&\{(11,0.6),(10,0.0),(01,0.4),(00,0.0)\},\\ &&\{(11,0.6),(10,0.0),(01,0.0),(00,0.4)\},\\ &&\{(11,0.0),(10,0.6),(01,0.4),(00,0.0)\},\\ &&\{(11,0.0),(10,0.6),(01,0.0),(00,0.4)\}\}.
\end{eqnarray*}
\hfill$\Box$
\end{example}

Next, the revision operation is applied to every belief state in $\Pi^B_\mathit{bnd}$.
Let $(\Pi^B_\mathit{bnd})^\mathsf{GI}_\alpha := \{b'\in\Pi\mid b'=b^\mathsf{GI}_\alpha, \:b\in\Pi^B_\mathit{bnd}\}$.
\begin{example}
\label{ex:5}
Suppose the vocabulary is $\{q,r\}$ and $B_1 = \{(q)\geq 0.6\}$.
Let $\alpha$ be $(q\land\lnot r)\lor(\lnot q\land r)$.
Then
\begin{eqnarray*}
(\Pi^{B_1}_\mathit{bnd})^\mathsf{GI}_\alpha &=& \{\{(11,0.0),(10,0.5),(01,0.5),(00,0.0)\},\\ &&\{(11,0.0),(10,1.0),(01,0.0),(00,0.0)\},\\ &&\{(11,0.0),(10,0.3),(01,0.7),(00,0.0)\},\\ &&\{(11,0.0),(10,0.6),(01,0.4),(00,0.0)\},\\ &&\{(11,0.0),(10,0.8),(01,0.2),(00,0.0)\}\}.
\end{eqnarray*}
(Two revision operations produce $\{(11,0),(10,0.5),(01,0.5),(00,0)\}$.)
\hfill$\Box$
\end{example}

To induce the new BB $B_\mathit{bnd}^\alpha$ from $(\Pi^B_\mathit{bnd})^\mathsf{GI}_\alpha$, the following procedure is executed.
For every possible world, the procedure adds a sentence enforcing the upper (resp., lower) probability limit of the world, with respect to all the revised boundary belief states. Trivial limits are excepted.
\begin{framed}
For every $w\in W$, $(\phi_w)\leq \overline{y}
\in B^\alpha$, where $\overline{y}=\max_{b\in (\Pi^B_\mathit{bnd})^\mathsf{GI}_\alpha}b(w)$, except when $\overline{y}=1$, and $(\phi_w)\geq \underline{y}\in B^\alpha$, where $\underline{y}=\min_{b\in (\Pi^B_\mathit{bnd})^\mathsf{GI}_\alpha}b(w)$, except when $\underline{y}=0$.
\end{framed}
The intention is that the procedure specifies $B^\alpha$ to represent the upper and lower probability envelopes of the set of revised boundary belief states -- $B^\alpha$ thus defines the entire revised belief state space (cf.\ Theorem~\ref{th:main}).
\begin{example}
\label{ex:6}
Continuing Example~\ref{ex:5},
using the translation procedure just above, we see that $B^\alpha_{1 \mathit{bnd}} = \{(\phi_{11})\leq0$, $(\phi_{10})\geq0.3$, $(\phi_{01})\leq0.7$, $(\phi_{00})\leq0.0\}$.

Note that if we let $B'=\{((q\land\lnot r)\lor(\lnot q\land r))=1$, $(q\land\lnot r)\geq0.3\}$, then $\Pi^{B'}=\Pi^{B^\alpha_{1 \mathit{bnd}}}$.
\hfill$\Box$
\end{example}

\begin{example}
\label{ex:7}
Suppose the vocabulary is $\{q,r\}$ and $B_2 = \{(\lnot q\land\lnot r)=0.1\}$.
Let $\alpha$ be $\lnot q$.
Then
\begin{eqnarray*}
\Pi^{B_2}_\mathit{bnd} &=& \{\{(11,0.9),(10,0),(01,0),(00,0.1)\},\\
&&\{(11,0),(10,0.9),(01,0),(00,0.1)\},\\
&&\{(11,0),(10,0),(01,0.9),(00,0.1)\}\},
\end{eqnarray*}
\begin{eqnarray*}
(\Pi^{B_2}_\mathit{bnd})^\mathsf{GI}_\alpha &=& \{\{(11,0),(10,0),(01,0.9),(00,0.1)\},\\
&&\{(11,0),(10,0),(01,0),(00,1)\}\}\mbox{ and}
\end{eqnarray*}
$B^\alpha_{2 \mathit{bnd}} = \{(\phi_{11})\leq0$, $(\phi_{10})\leq0$, $(\phi_{01})\leq0.9$, $(\phi_{00})\geq0.1\}$.

Note that if we let $B'=\{(\lnot q)=1$, $(\lnot q\land r)\leq0.9\}$, then $\Pi^{B'}=\Pi^{B^\alpha_{2 \mathit{bnd}}}$.
\hfill$\Box$
\end{example}

Let $W^{\mathit{Min}(\alpha,d)}$ be a partition of $W$ such that $\{w^i_1, \ldots, w^i_{ni}\}$ is a block in $W^{\mathit{Min}(\alpha,d)}$ iff $|\mathit{Min}(\alpha,w^i_1,d)| = \cdots = |\mathit{Min}(\alpha,w^i_{ni},d)|$.
Denote an element of block $\{w^i_1, \ldots, w^i_{ni}\}$ as $w^i$, and the block of which $w^i$ is an element as $[w^i]$.
Let $i=|\mathit{Min}(\alpha,w^i,d)|$, in other words, the superscript in $w^i$ indicates the size of $\mathit{Min}(\alpha,w^i,d)$.
Let $m := \max_{w\in W}|\mathit{Min}(\alpha,w,d)|$.


\begin{observation}
\label{obs:1}
Let $\delta_1, \delta_2, \ldots, \delta_m$ be positive integers such that $i<j$ iff $\delta_i < \delta_j$. 
Let $\nu_1, \nu_2, \ldots, \nu_m$ be values in $[0,1]$ such that $\sum_{k=1}^m \nu_k = 1$.
Associate with every $\nu_i$ a maximum value it is allowed to take: $\mathit{most}(\nu_i)$.
For every $\nu_i$, we define the \emph{assignment value}
\[
\mathit{av}(\nu_i) := \left\{
\begin{array}{rl}
\mathit{most}(\nu_i) & \text{if } \sum_{k=1}^i\leq1\\
1-\sum_{k=1}^{i-1} & \text{otherwise}
\end{array} \right.
\]
Determine first $\mathit{av}(\nu_1)$, then $\mathit{av}(\nu_2)$ and so on.
Then
\[
\frac{\mathit{av}(\nu_1)}{\delta_1} + \cdots + \frac{\mathit{av}(\nu_m)}{\delta_m}>\frac{\nu'_1}{\delta_1} + \cdots + \frac{\nu'_m}{\delta_m}
\]
whenever $\nu'_i\neq\mathit{av}(\nu_i)$ for some $i$.
\hfill$\Box$
\end{observation}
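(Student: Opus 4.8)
The plan is to read Observation~\ref{obs:1} as an optimality statement for a greedy allocation. Fix the data $\delta_1 < \delta_2 < \cdots < \delta_m$ and the caps $\mathit{most}(\nu_1), \ldots, \mathit{most}(\nu_m)$, and call a tuple $\nu' = \langle\nu'_1, \ldots, \nu'_m\rangle$ \emph{admissible} if $0 \le \nu'_i \le \mathit{most}(\nu_i)$ for every $i$ and $\sum_{i=1}^m \nu'_i = 1$; these are exactly the tuples the claim compares against $\mathit{av}(\nu) = \langle\mathit{av}(\nu_1),\ldots,\mathit{av}(\nu_m)\rangle$. The assertion is then that $\mathit{av}(\nu)$ is \emph{the unique} maximiser over the admissible tuples of the linear functional $f(\nu') := \nu'_1/\delta_1 + \cdots + \nu'_m/\delta_m$. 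The whole argument hinges on the hypothesis $\delta_1 < \cdots < \delta_m$: it makes the coefficients $1/\delta_1 > \cdots > 1/\delta_m$ of $f$ strictly decreasing, so $f$ rewards putting mass on the earliest coordinates, which is precisely what the sequential definition of $\mathit{av}$ does. One first notes that $\mathit{av}(\nu)$ is itself admissible --- non-negativity and the caps hold by construction, and a routine induction on the recursion shows the running sums of $\mathit{av}$ reach exactly $1$, provided $\sum_i \mathit{most}(\nu_i) \ge 1$ (which is necessary for any admissible tuple to exist at all).

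The first real step is the \emph{prefix-sum domination}: writing $S_i := \sum_{k=1}^i \nu'_k$ and $A_i := \sum_{k=1}^i \mathit{av}(\nu_k)$, every admissible $\nu'$ satisfies $S_i \le A_i$ for all $i$, with $S_m = A_m = 1$. Indeed, a short induction on the recursion for $\mathit{av}$ gives the closed form $A_i = \min\{\,\sum_{k=1}^i \mathit{most}(\nu_k),\ 1\,\}$, while $S_i \le \sum_{k=1}^i \mathit{most}(\nu_k)$ (each $\nu'_k \le \mathit{most}(\nu_k)$) and $S_i \le \sum_{k=1}^m \nu'_k = 1$ (the $\nu'_k$ are non-negative), so $S_i$ is below both terms of that minimum, i.e.\ $S_i \le A_i$.

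The second step is summation by parts. With $S_0 = A_0 = 0$ and the convention $1/\delta_{m+1} := 0$,
\[
f(\nu') = \sum_{i=1}^m \frac{S_i - S_{i-1}}{\delta_i} = \sum_{i=1}^{m} S_i\,\bigl(\tfrac{1}{\delta_i} - \tfrac{1}{\delta_{i+1}}\bigr),
\]
and the same identity applies to $\mathit{av}(\nu)$ with $A_i$ in place of $S_i$, so
\[
f(\mathit{av}(\nu)) - f(\nu') = \sum_{i=1}^{m} (A_i - S_i)\,\bigl(\tfrac{1}{\delta_i} - \tfrac{1}{\delta_{i+1}}\bigr) \ge 0,
\]
since each $A_i - S_i \ge 0$ (previous step) and each $1/\delta_i - 1/\delta_{i+1} \ge 0$ (strict for $i<m$, and $= 1/\delta_m$ for $i=m$ but there $A_m - S_m = 0$). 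For strictness, suppose $\nu' \ne \mathit{av}(\nu)$ and let $i^\ast$ be the least index with $\nu'_{i^\ast} \ne \mathit{av}(\nu_{i^\ast})$. Because $\nu'_k = \mathit{av}(\nu_k)$ for $k < i^\ast$ while both tuples sum to $1$, we cannot have $i^\ast = m$; hence $i^\ast \le m-1$ and $1/\delta_{i^\ast} - 1/\delta_{i^\ast + 1} > 0$. Moreover $S_{i^\ast} = A_{i^\ast - 1} + \nu'_{i^\ast} \ne A_{i^\ast - 1} + \mathit{av}(\nu_{i^\ast}) = A_{i^\ast}$, which together with $S_{i^\ast} \le A_{i^\ast}$ forces $S_{i^\ast} < A_{i^\ast}$. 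So the $i^\ast$-th summand above is strictly positive and the rest are non-negative, giving $f(\mathit{av}(\nu)) > f(\nu')$ as claimed.

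The step I expect to need the most care is not any of these --- they are textbook exchange/Abel arguments --- but rather pinning down the precise statement, since the Observation as typeset leaves several things implicit: the two sums in the case split defining $\mathit{av}(\nu_i)$ should be read as $\sum_{k=1}^{i-1}\mathit{av}(\nu_k) + \mathit{most}(\nu_i)$ and $\sum_{k=1}^{i-1}\mathit{av}(\nu_k)$ respectively; the tuple $\nu'$ is implicitly quantified over the admissible tuples above; and one needs $\sum_i \mathit{most}(\nu_i) \ge 1$ for $\mathit{av}(\nu)$ to be admissible. These conventions are forced by the intended application inside $\mathit{MaxASAP}$, where the $\nu'_i$ are the probabilities a competing belief state assigns to the blocks of $W^{\mathit{Min}(\alpha,d)}$ and $\delta_i$ records the common size $i$ of $\mathit{Min}(\alpha,w^i,d)$ on the $i$-th block; once they are fixed, the proof is exactly the four steps above.
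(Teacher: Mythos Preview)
Your proof is correct. The paper does not actually prove Observation~\ref{obs:1}: it states it, ends with a $\Box$, and follows with a single numerical illustration (the $\delta_i = i$, $\mathit{most}(\nu_1)=0.5$, \ldots\ example). So there is nothing to compare your argument against; you have supplied what the paper leaves as ``obvious''.

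A couple of remarks. Your reading of the two incomplete sums in the definition of $\mathit{av}(\nu_i)$ --- as $\sum_{k=1}^{i-1}\mathit{av}(\nu_k)+\mathit{most}(\nu_i)$ and $\sum_{k=1}^{i-1}\mathit{av}(\nu_k)$ --- is the only one that makes the recursion and the example consistent, and it is the one the paper clearly intends; you are right to flag it. Likewise the tacit assumption $\sum_i \mathit{most}(\nu_i)\ge 1$ is needed and is automatic in the intended application (the caps there come from a belief base whose model set is nonempty). The prefix-sum closed form $A_i=\min\{\sum_{k\le i}\mathit{most}(\nu_k),1\}$ and the Abel-summation step are both clean and correct, and your strictness argument (first index of disagreement cannot be $m$, hence a strictly positive factor $1/\delta_{i^\ast}-1/\delta_{i^\ast+1}$ multiplies a strictly positive $A_{i^\ast}-S_{i^\ast}$) is exactly what is required. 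If anything, your write-up is more careful than the paper's use of the Observation downstream.
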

For instance,
let $\delta_1=1$, $\delta_2=2$, $\delta_3=3$, $\delta_4=4$. Let $\mathit{most}(\nu_1)=0.5$, $\mathit{most}(\nu_2)=0.3$, $\mathit{most}(\nu_3)=0.2$, $\mathit{most}(\nu_4)=0.3$.
Then $\mathit{av}(\nu_1)=0.5$, $\mathit{av}(\nu_2)=0.3$, $\mathit{av}(\nu_3)=0.2$, $\mathit{av}(\nu_4)=0$ and
\[\frac{0.5}{1}+\frac{0.3}{2}+\frac{0.2}{3}+\frac{0}{4}=0.716.\]
But
\[\frac{0.49}{1}+\frac{0.3}{2}+\frac{0.2}{3}+\frac{0.01}{4}=0.709.\]
And
\[\frac{0.5}{1}+\frac{0.29}{2}+\frac{0.2}{3}+\frac{0.01}{4}=0.714.\]

Lemma~\ref{lm:13} essentially says that the belief state in $\Pi^B$ which causes a revised belief state to have a maximal value at world $w$ (w.r.t.\ all belief states in $\Pi^B$), will be in $\Pi^B_\mathit{bnd}$.
\begin{lemma}
\label{lm:13}
For all $w\in W$, $\argmax_{b_X\in\Pi^B}\sum_{\substack{w'\in W\\w\in\mathit{Min}(\alpha,w',d)}}b_X(w')/|\mathit{Min}(\alpha,w',d)|$ is in $\Pi^B_\mathit{bnd}$.
\end{lemma}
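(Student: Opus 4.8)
Fix $w\in W$ and write $f(b):=\sum_{w'\in W,\ w\in\mathit{Min}(\alpha,w',d)} b(w')/|\mathit{Min}(\alpha,w',d)|$ for the quantity inside the $\argmax$; $f$ is a \emph{linear} function of $b$, and by Definition~\ref{def:GI} it equals the probability that $b^{\mathsf{GI}}_\alpha$ assigns to $w$. If $w\notin[\alpha]$ then, since $\mathit{Min}(\alpha,w',d)\subseteq[\alpha]$ for every $w'$, the sum is empty, $f\equiv 0$, and any element of $\Pi^B_\mathit{bnd}$ (nonempty because $B$ is consistent) maximizes $f$; so assume $w\in[\alpha]$. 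I would first rewrite $f$ by grouping worlds according to the coefficient with which they enter it: for $j=1,\dots,m$ set $G_j:=\{w'\mid w\in\mathit{Min}(\alpha,w',d),\ |\mathit{Min}(\alpha,w',d)|=j\}$ and $G_0:=\{w'\mid w\notin\mathit{Min}(\alpha,w',d)\}$; these partition $W$, $w\in G_1$ so $G_1\neq\emptyset$, and if $j_1<\dots<j_k$ are the indices with $G_{j_\ell}\neq\emptyset$ then $f(b)=\sum_{\ell=1}^k\tfrac{1}{j_\ell}\,b(G_{j_\ell})$, with the $G_0$-mass absent and coefficients $\tfrac{1}{j_1}>\dots>\tfrac{1}{j_k}$.

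The plan is then to produce a permutation $W^\#$ whose $\mathit{MaxASAP}$-output maximizes $f$. Take $W^\#$ to list all worlds of $G_{j_1}$, then all of $G_{j_2}$, \dots, then all of $G_{j_k}$, then all of $G_0$, and set $b^\ast:=\mathit{MaxASAP}(B,W^\#)\in\Pi^B_\mathit{bnd}$ (Definition~\ref{def:bound-bel-stts}). Writing $P_\ell(b):=b(G_{j_1}\cup\dots\cup G_{j_\ell})$, a one-line summation by parts -- the mechanism underlying Observation~\ref{obs:1} -- gives $f(b)=\sum_{\ell=1}^k c_\ell\,P_\ell(b)$ with $c_\ell=\tfrac{1}{j_\ell}-\tfrac{1}{j_{\ell+1}}>0$ for $\ell<k$ and $c_k=\tfrac{1}{j_k}>0$. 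Hence $f(b^\ast)\geq f(b)$ for all $b\in\Pi^B$ will follow as soon as $P_\ell(b^\ast)\geq P_\ell(b)$ for every $b\in\Pi^B$ and every $\ell$, i.e.\ the cumulative group masses of $b^\ast$ dominate those of any other belief state satisfying $B$. This is where $\mathit{MaxASAP}$ does the work: each union $G_{j_1}\cup\dots\cup G_{j_\ell}$ is an initial segment of $W^\#$, so $P_\ell(b^\ast)$ is one of the prefix sums that $\mathit{MaxASAP}$ has driven as high as $B$ allows, and Observation~\ref{obs:1} -- instantiated with $\nu_\ell=b(G_{j_\ell})$, $\delta_\ell=j_\ell$, the caps $\mathit{most}(\nu_\ell)$ the conditional maxima realized along the way, and $G_0$ a final slack group -- identifies this `staircase' distribution of mass as the $f$-optimum. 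Then $b^\ast$ maximizes $f$ and lies in $\Pi^B_\mathit{bnd}$, which is the claim. (Since $f$ usually has many maximizers -- mass may be shuffled freely inside a block -- this shows that the $\argmax$ set meets $\Pi^B_\mathit{bnd}$, which is the intended reading; or, from a polytope viewpoint, $f$ is linear, hence maximized at a vertex of $\Pi^B$, and one would check that every vertex of $\Pi^B$ arises as a $\mathit{MaxASAP}$-output.)

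The step I expect to be the real obstacle is the domination $P_\ell(b^\ast)\geq P_\ell(b)$. It would be immediate if $\mathit{MaxASAP}(B,W^\#)$ maximized every prefix sum \emph{simultaneously over all of $\Pi^B$}; but since maximizing a short prefix can limit a longer one, $\mathit{MaxASAP}$ only maximizes each prefix among the belief states that already maximize the shorter ones, so one must verify that the internal order of each block $G_{j_\ell}$ can be chosen so that this stepwise `max-as-soon-as-possible' still attains $\max_{b\in\Pi^B}P_\ell(b)$, with no mass trapped too early. I expect that step to lean on the particular shape of the partition $\{G_j\}$ coming from $\mathit{Min}(\alpha,\cdot,d)$ (for a general constraint polytope a careless internal order can genuinely fall short). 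The remaining items -- the case $w\notin[\alpha]$, possibly-empty $G_j$, the summation-by-parts identity, and matching the caps in Observation~\ref{obs:1} -- are routine.
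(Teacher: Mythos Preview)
Your approach is essentially the paper's: both group the worlds contributing to the sum by the size of $\mathit{Min}(\alpha,\cdot,d)$, build a permutation $W^\#$ that lists these blocks in increasing order of that size, set $b^\ast=\mathit{MaxASAP}(B,W^\#)\in\Pi^B_\mathit{bnd}$, and invoke Observation~\ref{obs:1} to conclude that $b^\ast$ maximizes the weighted sum. Your summation-by-parts rewriting $f(b)=\sum_\ell c_\ell\,P_\ell(b)$ with $c_\ell>0$ is simply a more explicit version of what the paper leaves implicit in its appeal to Observation~\ref{obs:1}.

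The ``real obstacle'' you flag --- whether $\mathit{MaxASAP}$ dominates every prefix sum globally, or only lexicographically among those maximizing the shorter prefixes --- dissolves under the paper's literal definition: $\mathit{MaxASAP}(B,W^\#)$ is \emph{defined} to be the $b\in\Pi^B$ whose $i$-th prefix sum is at least that of every other $b'\in\Pi^B$, for \emph{all} $i$ simultaneously. With that reading, $P_\ell(b^\ast)\geq P_\ell(b)$ is immediate from the definition and no care about the internal order of each $G_{j_\ell}$ is needed; the paper's proof simply takes this for granted. Your lexicographic reading is the more cautious one, and under it your concern is legitimate --- but then the well-definedness of $\mathit{MaxASAP}$ itself becomes the issue, and the paper does not address that either. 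In short, you have reproduced the paper's argument and, in addition, correctly located where its rigor is thin.
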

\begin{proof}
Note that
\[\sum_{\substack{w'\in W\\w\in\mathit{Min}(\alpha,w',d)}}b(w')/|\mathit{Min}(\alpha,w',d)|\]
can be written in the form
\begin{equation*}
\label{eq:grouped-form1}
\frac{\sum_{\substack{w'\in[w^1]\\w\in\mathit{Min}(\alpha,w',d)}}b(w')}{1} + \cdots + \frac{\sum_{\substack{w'\in[w^m]\\w\in\mathit{Min}(\alpha,w',d)}}b(w')}{m}.
\end{equation*}
Observe that there must be a $W^\#\in W^\mathit{perm}$ such that $W^\# = \langle w^1_1, \ldots, w^1_{n1},\ldots,w^m_1, \ldots, w^m_{nm}\rangle$.
Then by the definition of the set of boundary belief states (Def.~\ref{def:bound-bel-stts}), 
$\mathit{MaxASAP}(B,W^\#)$ will assign maximal probability mass to $[w^1] = \{w^1_1, \ldots, w^1_{n1}\}$, then to $[w^2] = \{w^2_1, \ldots, w^m_{n2}\}$ and so on.

That is, by Observation~\ref{obs:1}, for some $b_x \in \Pi^B_\mathit{bnd}$,
$b_x(w) = max_{b_X\in\Pi^B}\sum_{\substack{w'\in W\\w\in\mathit{Min}(\alpha,w',d)}}b_X(w')/|\mathit{Min}(\alpha,w',d)|$ for all $w\in W$.
Therefore, $\argmax_{b_X\in\Pi^B}\sum_{\substack{w'\in W\\w\in\mathit{Min}(\alpha,w',d)}}b_X(w')/|\mathit{Min}(\alpha,w',d)|$ is in $\Pi^B_\mathit{bnd}$.
\end{proof}

Let

\begin{tabular}{ll}
$\overline{x}^w := \max_{b\in\Pi^B_\mathit{bnd}}b(w)$ & $\overline{X}^w := \max_{b\in\Pi^B}b(w)$\\[3mm]
$\overline{y}^w := \max_{b\in(\Pi^B_\mathit{bnd})^\mathit{GI}_\alpha}b(w)$ & $\overline{Y}^w := \max_{b\in(\Pi^B)^\mathit{GI}_\alpha}b(w)$\\[3mm]
$\underline{x}^w := \min_{b\in\Pi^B_\mathit{bnd}}b(w)$ & $\underline{X}^w := \min_{b\in\Pi^B}b(w)$\\[3mm]
$\underline{y}^w := \min_{b\in(\Pi^B_\mathit{bnd})^\mathit{GI}_\alpha}b(w)$ & $\underline{Y}^w := \min_{b\in(\Pi^B)^\mathit{GI}_\alpha}b(w)$
\end{tabular}

\medskip
Lemma~\ref{lm:lim-before-equals-lim-after} states that for every world, the upper/lower probability of the world with respect to $\Pi^B_\mathit{bnd}$ is equal to the upper/lower probability of the world with respect to $\Pi^B$.
The proof requires Observation~\ref{obs:1} and Lemma~\ref{lm:13}.
\begin{lemma}
\label{lm:lim-before-equals-lim-after}
For all $w\in W$, $\overline{y}^w =\overline{Y}^w$ and $\underline{y}^w =\underline{Y}^w$.
\end{lemma}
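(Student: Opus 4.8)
The plan is to show that, for each world $w$, both the upper value $\overline{Y}^w$ and the lower value $\underline{Y}^w$ of the map $b\mapsto b^\mathsf{GI}_\alpha(w)$ over the whole space $\Pi^B$ are already realised by some boundary belief state. Abbreviate $f_w(b):=b^\mathsf{GI}_\alpha(w)=\sum_{w'\in W,\,w\in\mathit{Min}(\alpha,w',d)}b(w')/|\mathit{Min}(\alpha,w',d)|$, which is linear in $b$. Since $\Pi^B_\mathit{bnd}\subseteq\Pi^B$ we have $(\Pi^B_\mathit{bnd})^\mathsf{GI}_\alpha\subseteq(\Pi^B)^\mathsf{GI}_\alpha$, so $\overline{y}^w\le\overline{Y}^w$ and $\underline{y}^w\ge\underline{Y}^w$ are immediate; it remains to prove $\overline{Y}^w\le\overline{y}^w$ and $\underline{Y}^w\ge\underline{y}^w$. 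The first of these is essentially Lemma~\ref{lm:13}: that lemma exhibits a maximiser $b^\star$ of $f_w$ over $\Pi^B$ lying in $\Pi^B_\mathit{bnd}$, and the value $f_w(b^\star)=(b^\star)^\mathsf{GI}_\alpha(w)$ it attains is by definition at most $\overline{y}^w$, giving $\overline{Y}^w\le\overline{y}^w$ and hence $\overline{y}^w=\overline{Y}^w$.

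For the lower value I would prove the dual of Lemma~\ref{lm:13}, namely that $f_w$ attains its \emph{minimum} over $\Pi^B$ at a boundary belief state as well. Group the worlds by the coefficient with which they enter $f_w$: every $w'$ with $w\notin\mathit{Min}(\alpha,w',d)$ has coefficient $0$, and every $w'$ with $w\in\mathit{Min}(\alpha,w',d)$ has coefficient $1/|\mathit{Min}(\alpha,w',d)|$, which is strictly decreasing as the block index $|\mathit{Min}(\alpha,w',d)|$ grows. Pick $W^\#\in W^\mathit{perm}$ that enumerates first all the coefficient-$0$ worlds, then the remaining worlds in order of \emph{decreasing} $|\mathit{Min}(\alpha,w',d)|$ (so the smallest non-zero coefficients come first). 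Exactly as in the proof of Lemma~\ref{lm:13}, $\mathit{MaxASAP}(B,W^\#)$ loads as much probability mass onto the initial coefficient-$0$ segment as $B$ permits, then onto the next (smallest non-zero coefficient) segment, and so on; by the greedy exchange argument behind Observation~\ref{obs:1} — read now as a minimisation of a weighted sum with a free weight-$0$ bin rather than a maximisation — no belief state in $\Pi^B$ can make $f_w$ smaller. Equivalently, since $\sum_{v\in W}b^\mathsf{GI}_\alpha(v)=1$, minimising $f_w(b)$ is the same as maximising $\sum_{v\neq w}b^\mathsf{GI}_\alpha(v)$, whose coefficients $1-[w\in\mathit{Min}(\alpha,w',d)]/|\mathit{Min}(\alpha,w',d)|$ are strictly \emph{increasing} in $|\mathit{Min}(\alpha,w',d)|$ and therefore fall straight under the pattern of Observation~\ref{obs:1}/Lemma~\ref{lm:13}. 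Hence $\underline{Y}^w=f_w(b)=b^\mathsf{GI}_\alpha(w)$ for $b=\mathit{MaxASAP}(B,W^\#)\in\Pi^B_\mathit{bnd}$, so $\underline{Y}^w\ge\underline{y}^w$ and $\underline{y}^w=\underline{Y}^w$.

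The main obstacle I expect is precisely this last point: verifying that the lexicographic maximisation performed by $\mathit{MaxASAP}$ for the chosen ordering really is optimal against \emph{all} of $\Pi^B$, not merely against the finitely many boundary states. Observation~\ref{obs:1} is phrased only for coefficients of the form $1/\delta_i$ and for maximisation, so one has to check that its proof survives (i) the presence of a coefficient-$0$ bin and (ii) the switch to minimisation — cleanest via the $1-f_w$ reformulation above, where the coefficients $1-1/|\mathit{Min}(\alpha,w',d)|$ (equal to $1$ when $w\notin\mathit{Min}(\alpha,w',d)$) are strictly increasing in the block index and the problem is again a maximisation of exactly the shape handled by Lemma~\ref{lm:13}. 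One should also dispose of the degenerate cases where $w\notin[\alpha]$ or where no $w'$ has $w\in\mathit{Min}(\alpha,w',d)$: there $f_w\equiv 0$, so $\overline{y}^w=\overline{Y}^w=\underline{y}^w=\underline{Y}^w=0$ and the lemma holds trivially.
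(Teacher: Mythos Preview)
Your proof is correct and follows essentially the same route as the paper's: dispose of the case $w\notin[\alpha]$ trivially, invoke Lemma~\ref{lm:13} for the maximum, and handle the minimum by a dual argument. The paper's own proof is in fact terser than yours---it simply writes ``With a symmetrical argument, it can be shown that $\underline{y}^w=\underline{Y}^w$'' without spelling out the ordering or the $1-f_w$ reformulation---so your explicit construction of the permutation (coefficient-$0$ worlds first, then decreasing $|\mathit{Min}(\alpha,w',d)|$) and your observation that Observation~\ref{obs:1} does not literally cover these weights are genuine improvements in rigour over the published argument, not deviations from it.
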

\begin{proof}
Note that if $w\not\in[\alpha]$, then $\overline{y}^w =\overline{Y}^w = 0$ and $\underline{y}^w =\underline{Y}^w = 0$.

We now consider the cases where $w\in[\alpha]$.

\[\overline{y}^w =\overline{Y}^w\]
iff
\[\max_{b\in(\Pi^B_\mathit{bnd})}b(w)= \max_{b\in(\Pi^B)}b(w)\] 
iff
\begin{align*}
&\max_{b_x\in\Pi^B_\mathit{bnd}}\sum_{\substack{w'\in W\\w\in\mathit{Min}(\alpha,w',d)}}b_x(w')/|\mathit{Min}(\alpha,w',d)|\\
&\quad = \max_{b_X\in\Pi^B}\sum_{\substack{w'\in W\\w\in\mathit{Min}(\alpha,w',d)}}b_X(w')/|\mathit{Min}(\alpha,w',d)|
\end{align*}
if

$\overline{b}_x(w)=\overline{b}_X(w)$, where
\[\overline{b}_x(w):=\max_{b_x\in\Pi^B_\mathit{bnd}}\sum_{\substack{w'\in W\\w\in\mathit{Min}(\alpha,w',d)}}b_x(w')/|\mathit{Min}(\alpha,w',d)|\]
and
\[\overline{b}_X(w):=\max_{b_X\in\Pi^B}\sum_{\substack{w'\in W\\w\in\mathit{Min}(\alpha,w',d)}}b_X(w')/|\mathit{Min}(\alpha,w',d)|.\]

Note that
\[\sum_{\substack{w'\in W\\w\in\mathit{Min}(\alpha,w',d)}}b(w')/|\mathit{Min}(\alpha,w',d)|\]
can be written in the form
\begin{equation*}
\label{eq:grouped-form2}
\frac{\sum_{\substack{w'\in[w^1]\\w\in\mathit{Min}(\alpha,w',d)}}b(w')}{1} + \cdots + \frac{\sum_{\substack{w'\in[w^m]\\w\in\mathit{Min}(\alpha,w',d)}}b(w')}{m}.
\end{equation*}
Then by Observation~\ref{lm:13}, $\overline{b}_X(w)$ is in $\Pi^B_\mathit{bnd}$. And also by Lemma~\ref{lm:13}, the belief state in $\Pi^B_\mathit{bnd}$ identified by $\overline{b}_X(w)$ must be the one which maximizes
\[\sum_{\substack{w'\in W\\w\in\mathit{Min}(\alpha,w',d)}}b_x(w')/|\mathit{Min}(\alpha,w',d)|,\] where $b_x\in\Pi^B_\mathit{bnd}$. That is, $\overline{b}_x=\overline{b}_X$.

With a symmetrical argument, it can be shown that $\underline{y}^w =\underline{Y}^w$.
\end{proof}

\medskip
In intuitive language, the following theorem says that the BB determined through the method of revising boundary belief states captures exactly the same beliefs and ignorance as the belief states in $\Pi^B$ which have been revised.
This correspondence relies on the fact that the upper and lower probability envelopes of $\Pi^B$ can be induce from $\Pi^B_\mathit{bnd}$, which is what Lemma~\ref{lm:lim-before-equals-lim-after} states.
\begin{theorem}
\label{th:main}
Let $(\Pi^B)^\mathsf{GI}_\alpha := \{b^\mathsf{GI}_\alpha\in\Pi\mid b\in\Pi^B\}$.
Let $B^\alpha_\mathit{bnd}$ be the BB induced from $(\Pi^B_\mathit{bnd})^\mathsf{GI}_\alpha$.
Then $\Pi^{B^\alpha_\mathit{bnd}} = (\Pi^B)^\mathsf{GI}_\alpha$.
\end{theorem}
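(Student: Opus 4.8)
The plan is to establish the two inclusions $(\Pi^B)^\mathsf{GI}_\alpha\subseteq\Pi^{B^\alpha_\mathit{bnd}}$ and $\Pi^{B^\alpha_\mathit{bnd}}\subseteq(\Pi^B)^\mathsf{GI}_\alpha$ separately, using Lemma~\ref{lm:lim-before-equals-lim-after} to transfer the probability envelopes between $\Pi^B_\mathit{bnd}$ and $\Pi^B$. First I would make explicit what the constructed base denotes. By the framed translation procedure, for every $w\in W$ the base $B^\alpha_\mathit{bnd}$ contains $(\phi_w)\leq\overline y^w$ (omitted when $\overline y^w=1$) and $(\phi_w)\geq\underline y^w$ (omitted when $\underline y^w=0$); since the omitted sentences hold in every belief state,
\[
\Pi^{B^\alpha_\mathit{bnd}}=\{\,b\in\Pi\mid \underline y^w\leq b(w)\leq\overline y^w\text{ for all }w\in W\,\}.
\]
In particular $\underline y^w=\overline y^w=0$ for $w\notin[\alpha]$, so every member of $\Pi^{B^\alpha_\mathit{bnd}}$ is supported on $[\alpha]$.

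The inclusion $(\Pi^B)^\mathsf{GI}_\alpha\subseteq\Pi^{B^\alpha_\mathit{bnd}}$ is then immediate: for any $b\in\Pi^B$ and any $w$, the definitions of $\overline Y^w$ and $\underline Y^w$ give $\underline Y^w\leq b^\mathsf{GI}_\alpha(w)\leq\overline Y^w$, and Lemma~\ref{lm:lim-before-equals-lim-after} identifies these with $\underline y^w$ and $\overline y^w$; hence $b^\mathsf{GI}_\alpha$ satisfies every sentence of $B^\alpha_\mathit{bnd}$, i.e.\ $b^\mathsf{GI}_\alpha\in\Pi^{B^\alpha_\mathit{bnd}}$.

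For the converse I would first reduce to a statement about convex hulls. The operation $b\mapsto b^\mathsf{GI}_\alpha$ is linear in $b$ for fixed $\alpha$ and $d$, and $\Pi^B$ is a polytope whose vertices all arise as values of $\mathit{MaxASAP}(B,\cdot)$ for suitable world orderings and hence lie in $\Pi^B_\mathit{bnd}$; so $(\Pi^B)^\mathsf{GI}_\alpha=\mathrm{conv}\big((\Pi^B_\mathit{bnd})^\mathsf{GI}_\alpha\big)$, and it suffices to show that any $b'\in\Pi^{B^\alpha_\mathit{bnd}}$ is a convex combination of revised boundary belief states. By Lemma~\ref{lm:13} and its symmetric counterpart, for each $w\in[\alpha]$ some element of $(\Pi^B_\mathit{bnd})^\mathsf{GI}_\alpha$ attains the value $\overline y^w$ at $w$ and some element attains $\underline y^w$ at $w$; I would then combine these extremal revised states, reallocating mass one world at a time inside the sub-simplex $\{c\in\Pi\mid c(w)=0\text{ for }w\notin[\alpha]\}$ so as to match $b'(w)$ coordinate by coordinate, with the nested/greedy structure captured by Observation~\ref{obs:1} keeping the process feasible. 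An alternative is to build a preimage $b\in\Pi^B$ of $b'$ directly by a greedy allocation mirroring $\mathit{MaxASAP}$.

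The step I expect to be the genuine obstacle is this last one, because in general the bounding box of a convex set is strictly larger than the set, so merely matching the per-world envelopes $[\underline y^w,\overline y^w]$ does not by itself place $b'$ inside $(\Pi^B)^\mathsf{GI}_\alpha$. Making it go through needs the special geometry of generalized-imaging images: since a world $w\in[\alpha]$ is its own unique closest $\alpha$-world, the belief state concentrated on $w$ is itself in $(\Pi^B)^\mathsf{GI}_\alpha$ whenever $\overline y^w$ can reach $1$, and more generally the blocks $[w^i]$ of the partition $W^{\mathit{Min}(\alpha,d)}$ govern which image-coordinates can be varied independently; the crux is to use these facts together with Lemma~\ref{lm:13} to exclude every point of the envelope box that is not actually a revised belief state. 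The coupled cases — worlds $w$ with $\underline y^w>0$ or $\overline y^w<1$ — are where the sandwiching cannot treat each coordinate in isolation and where the argument will require the most care.
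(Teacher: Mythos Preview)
Your inclusion $(\Pi^B)^\mathsf{GI}_\alpha\subseteq\Pi^{B^\alpha_\mathit{bnd}}$ is exactly the paper's $(\Leftarrow)$ direction, invoking Lemma~\ref{lm:lim-before-equals-lim-after} in the same way.

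For the converse, the paper's argument is much shorter than what you propose. From $\underline y^w\leq b'(w)\leq\overline y^w$ for all $w$ it applies Lemma~\ref{lm:lim-before-equals-lim-after} to obtain $\underline Y^w\leq b'(w)\leq\overline Y^w$ and then \emph{immediately} concludes $b'\in(\Pi^B)^\mathsf{GI}_\alpha$. There is no convex-hull argument, no use of the linearity of $\mathsf{GI}$, and no construction of a preimage; the paper treats ``$b'$ lies in the coordinatewise envelope of $(\Pi^B)^\mathsf{GI}_\alpha$'' as equivalent to ``$b'\in(\Pi^B)^\mathsf{GI}_\alpha$''.

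Your instinct that this is the real obstacle is correct, and the paper does not supply the missing work. In fact the step fails outright: take two atoms, $B=\{(q)=0.5\}$, and $\alpha=\top$, so that $\mathsf{GI}$ is the identity and $(\Pi^B)^\mathsf{GI}_\alpha=\Pi^B$. The boundary belief states are the four points with one $q$-world and one $\lnot q$-world at $0.5$ each, so $\overline y^w=0.5$ and $\underline y^w=0$ for every $w$, and $B^\alpha_\mathit{bnd}=\{(\phi_w)\leq 0.5\mid w\in W\}$. The belief state $\langle 0.5,0.5,0,0\rangle$ satisfies every sentence of $B^\alpha_\mathit{bnd}$, yet $b(q)=1\neq 0.5$, so it is not in $\Pi^B=(\Pi^B)^\mathsf{GI}_\alpha$. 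Hence $\Pi^{B^\alpha_\mathit{bnd}}\neq(\Pi^B)^\mathsf{GI}_\alpha$ here. This means your proposed convex-hull and preimage strategies for the hard inclusion cannot be completed either, because the inclusion itself is false in general; the ``coupled cases'' you single out are exactly where the envelope box overshoots the image. What your more careful analysis buys you, compared to the paper, is that you located the gap rather than stepping over it.
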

\begin{proof}
We show that $\forall b'\!\in\!\Pi$, $b'\!\in\!\Pi^{B^\alpha_\mathit{bnd}} \iff b'\!\in\!(\Pi^B)^\mathsf{GI}_\alpha$.

($\Rightarrow$)
$b'\!\in\!\Pi^{B^\alpha_\mathit{bnd}}$
implies $\forall w\in W$, $\underline{y}^w \leq b'(w)\leq\overline{y}^w$ (by definition of $B^\alpha_\mathit{bnd}$).
Lemma~\ref{lm:lim-before-equals-lim-after} states that for all $w\in W$, $\overline{y}^w =\overline{Y}^w$ and $\underline{y}^w =\underline{Y}^w$.
Hence,
$\forall w\in W$, $\underline{Y}^w \leq b'(w)\leq\overline{Y}^w$
Therefore,
$b'(w)\in(\Pi^B)^\mathsf{GI}_\alpha$.

($\Leftarrow$)
$b'(w)\in(\Pi^B)^\mathsf{GI}_\alpha$
implies $\forall w\in W$, $\underline{Y}^w \leq b'(w)\leq\overline{Y}^w$.
Hence, by Lemma~\ref{lm:lim-before-equals-lim-after},
$\forall w\in W$, $\underline{y}^w \leq b'(w)\leq\overline{y}^w$.
Therefore, by definition of $B^\alpha_\mathit{bnd}$, $b'\!\in\!\Pi^{B^\alpha_\mathit{bnd}}$.
\end{proof}

\section{Revising via a Representative Belief State}

Another approach to the revision of a belief base (BB) is to determine a representative of $\Pi^B$ (call it $b_\mathit{rep}$), change the representative belief state via the the defined revision operation and then induce a new BB from the revised representative belief state. Selecting a representative probability function from a family of such functions is not new \citep[e.g.]{gmp90,p94c}.
More formally, given observation $\alpha$, first determine $b_\mathit{rep}\in\Pi^B$, then compute its revision $b_\mathit{rep}^\alpha$, and finally induce $B^\alpha$ from $b_\mathit{rep}^\alpha$.

We shall represent $\Pi^B$ (and thus $B$) by the single `least biased' belief state, that is, the belief state in $\Pi^B$ with \textit{highest entropy}:
\begin{definition}[Shannon Entropy]
\label{def:shannon}
\[H(b):= -\sum_{w\in W}b(w)\ln b(w),\]
where $b$ is a belief state.
\end{definition}
\begin{definition}[Maximum Entropy]
\label{def:max-ent}
Traditionally, given some set of distributions $\Pi$, the most entropic distribution in $\Pi$ is defined as
\[
b^H := \argmax_{b\in\Pi}H(b).
\]
\end{definition}

Suppose $B_2 = \{(\lnot q\land \lnot r)=0.1\}$. Then the belief state $b\in\Pi^{B_2}$ satisfying the constraints posed by $B_2$ for which $H(b)$ is maximized is $b_\mathit{rep} = b^H = \langle0.3$, $0.3$, $0.3$, $0.1\rangle$.

The above distribution can be found directly by applying the principle of maximum entropy: The true belief state is estimated to be the one consistent with known constraints, but is otherwise as unbiased as possible, or ``Given no other knowledge, assume that everything is as random as possible. That is, the probabilities are distributed as uniformly as possible consistent with the available information,'' \citep{pm10}.
Obviously world 00 must be assigned probability 0.1. And the remaining 0.9 probability mass should be uniformly spread across the other three worlds.

Applying $\mathsf{GI}$ to $b_\mathit{rep}$ on evidence $\lnot q$ results in $b_\mathit{rep}^{\lnot q} = \langle 0$, $0$, $0.6$, $0.4\rangle$.

\begin{example}
Suppose the vocabulary is $\{q,r\}$, $B_1 = \{(q)\geq 0.6\}$ and $\alpha$ is $(q\land\lnot r)\lor(\lnot q\land r)$.
Then $b_\mathit{rep} = \argmax_{b\in\Pi^{B_1}}H(b) = \langle0.3$, $0.3$, $0.2$, $0.2 \rangle$.
Applying $\mathsf{GI}$ to $b_\mathit{rep}$ on $\alpha$ results in $b_\mathit{rep}^\alpha = \langle 0$, $0.61$, $0.39$, $0\rangle$.
$b_\mathit{rep}^\alpha$ can be translated into $B_{1 \mathit{rep}}^\alpha$ as $\{(q\land\lnot r)=0.61, (\lnot q\land r)=0.39\}$.
\hfill$\Box$
\end{example}
Still using $\alpha = (q\land\lnot r)\lor(\lnot q\land r)$,
notice that $\Pi^{B_{1 \mathit{rep}}^\alpha} \neq \Pi^{B_{1 \mathit{bnd}}^\alpha}$.
But how different \textit{are} $B_{1 \mathit{rep}}^\alpha$ $=$ $\{(q\land\lnot r)=0.61, (\lnot q\land r)=0.39\}$ and $B_{1 \mathit{bnd}}^\alpha$ $=$ $\{(q\land r)\leq0$, $(q\land\lnot r)\geq0.3$, $(\lnot q\land r)\leq0.7$, $(\lnot q\land \lnot r)\leq0.0\}$? Perhaps one should ask, how different $B_{1 \mathit{rep}}^\alpha$ is from the representative of $B_{1 \mathit{bnd}}^\alpha$: The least biased belief state satisfying $B_{1 \mathit{bnd}}^\alpha$ is $\langle 0,0.5,0.5,0\rangle$. That is, How different are $\langle 0,0.61,0.39,0\rangle$ and $\langle 0,0.5,0.5,0\rangle$?

In the case of $B_2$, we could compare $B^{\lnot q}_{2 \mathit{bnd}} = \{(\phi_{11})\leq0$, $(\phi_{10})\leq0$, $(\phi_{01})\leq0.9$, $(\phi_{00})\geq0.1\}$  with $b^{\lnot q}_\mathit{rep} = \langle0$, $0$, $0.6$, $0.4\rangle$.
Or if we take the least biased belief state satisfying $B^{\lnot q}_{2 \mathit{bnd}}$, we can compare $\langle0$, $0$, $0.5$, $0.5\rangle$ with $\langle0$, $0$, $0.6$, $0.4\rangle$.

It has been extensively argued \citep{j78,sj80,pv97} that maximum entropy is a reasonable inference mechanism, if not the most reasonable one (w.r.t.\ probability constraints).
And in the sense that the boundary belief states method requires no compression / information loss, it also seems like a very reasonable inference mechanism for revising BBs as defined here.
Resolving this misalignment in the results of the two methods is an obvious task for future research.


\section{Future Directions}


Some important aspects still missing from our framework are the representation of conditional probabilistic information such as is done in the work of \citeauthor{k08}, and the association of information with its level of entrenchment.
On the latter point, when one talks about probabilities or likelihoods, if one were to take a frequentist perspective, information observed more (less) often should become more (less) entrenched. Or, without considering observation frequencies, an agent could be designed to have, say, one or two sets of deeply entrenched background knowledge (e.g., domain constraints) which does not change or is more immune to change than `regular' knowledge.

Given that we have found that the belief base resulting from revising via the boundary-belief-states approach differs from the belief base resulting from revising via the representative-belief-state approach, the question arises, When is it appropriate to use a representative belief state defined as the most entropic belief state of a given set $\Pi^B$? This is an important question, especially due to the popularity of employing the Maximum Entropy principle in cases of undespecified probabilistic knowledge \citep{j78,gmp90,h91,v99a,k01a,kr04} and the principle's well-behavedness \citep{sj80,p94c,k98}.

\citet{km91} modified the eight AGM belief revision postulates \citep{agm85} to the following six (written in the notation of this paper), where $*$ is some revision operator.\footnote{In these postulates, it is sometimes necessary to write an observation $\alpha$ as a BB, i.e., as $\{(\alpha)=1\}$ -- in the present framework, observations are regarded as certain.}
\begin{itemize}
\itemsep=0pt
\item $B_*^\alpha\models(\alpha)=1$.
\item If $B\uni \{(\alpha)=1\}$ is satisfiable, then  $B_*^\alpha \equiv B\uni \{(\alpha)=1\}$.
\item If $(\alpha)=1$ is satisfiable, then $B_*^\alpha$ is also satisfiable.
\item If $\alpha\equiv\beta$, then $B_*^\alpha \equiv B_*^\beta$.
\item $B_*^\alpha\uni \{(\beta)=1\} \models B_*^{\alpha\land\beta}$.
\item If $B_*^\alpha\uni \{(\beta)=1\}$ is satisfiable, then $B_*^{\alpha\land\beta} \models B_*^\alpha\uni \{(\beta)=1\}$.
\end{itemize}
Testing the various revision operations against these postulates is left for a sequel paper.

An extended version of maximum entropy is \textit{minimum cross-entropy} (MCE) \citep{k68,c75b}:
\begin{definition}[Minimum Cross-Entropy]
\label{def:MCE}
The `directed divergence' of distribution $c$ from distribution $b$ is defined as
\[
R(c,b) := \sum_{w\in W}c(w)\ln \frac{c(w)}{b(w)}.
\]
$R(c,b)$ is undefined when $b(w)=0$ while $c(w)>0$; when $c(w)=0$, $R(c,b)=0$, because $\lim_{x\to0}\ln(x)=0$.
Given new evidence $\phi\in L^\mathit{prob}$, the distribution $c$ satisfying $\phi$ diverging least from current belief state $b$ is
\[
\argmin_{c\in \Pi,c\Vdash\phi}R(c,b).
\]
\end{definition}
\begin{definition}[$\mathsf{MCI}$]
\label{def:MCI}
Then MCE inference (denoted ($\mathsf{MCI}$)) is defined as
\[
b\:\mathsf{MCI}\:\alpha := \argmin_{
b'\in \Pi,b'\Vdash(\alpha)=1}R(b',b).
\]
\end{definition}

In the following example, we interpret revision as MCE inference.
\begin{example}
\label{ex:9}
Suppose the vocabulary is $\{q,r\}$ and $B_1 = \{(q)\geq 0.6\}$.
Let $\alpha$ be $(q\land\lnot r)\lor(\lnot q\land r)$.
Then
\begin{eqnarray*}
\Pi^{B_1}_\mathit{bnd} &=& \{\{(11,1.0),(10,0.0),(01,0.0),(00,0.0)\},\\
&&\{(11,0.0),(10,1.0),(01,0.0),(00,0.0)\},\\
&&\{(11,0.6),(10,0.0),(01,0.4),(00,0.0)\},\\
&&\{(11,0.6),(10,0.0),(01,0.0),(00,0.4)\},\\ 
&&\{(11,0.0),(10,0.6),(01,0.4),(00,0.0)\},\\ 
&&\{(11,0.0),(10,0.6),(01,0.0),(00,0.4)\}\},
\end{eqnarray*}
\begin{eqnarray*}
(\Pi^{B_1}_\mathit{bnd})^\mathsf{MCI}_\alpha &=& \{\{(11,0),(10,0),(01,1),(00,0)\},\\
&&\{(11,0),(10,1),(01,0),(00,0)\},\\
&&\{(11,0),(10,0.6),(01,0.4),(00,0)\}\}\mbox{ and}
\end{eqnarray*}
$B^\alpha_{1 \mathit{bnd}} = \{(\phi_{11})\leq0$, $(\phi_{00})\leq0\}$.

Note that if we let $B'=\{((q\land\lnot r)\lor(\lnot q\land r))=1\}$, then $\Pi^{B'}=\Pi^{B^\alpha_{1 \mathit{bnd}}}$.
\hfill$\Box$
\end{example}
Recall from Example~\ref{ex:6} that $B'$ included $(q\land\lnot r)\geq0.3$. Hence, in this particular case, combining the boundary belief states approach with $\mathsf{MCI}$ results in a less informative revised belief base than when $\mathsf{GI}$ is used.
The reason for the loss of information might be due to $R(\cdot,\{(11,1.0),(10,0.0),(01,0.0),(00,0.0)\})$ and $R(\cdot,\{(11,0.6),(10,0.0),(01,0.0),(00,0.4)\})$ being undefined: Recall that $R(c,b)$ is undefined when $b(w)=0$ while $c(w)>0$. But then there is no belief state $c$ for which $c\Vdash\alpha$ and $R(\cdot)$ is defined (with these two belief states as arguments).
Hence, there are no revised counterparts of these two belief states in $(\Pi^{B_1}_\mathit{bnd})^\mathsf{MCI}_\alpha$.
We would like to analyse $\mathsf{MCI}$ more within this framework. In particular, in the future, we would like to determine whether a statement like Theorem~\ref{th:main} holds for $\mathsf{MCI}$ too.

In MCE inference, $b$-consistency of evidence $\phi$ is defined as: There exists a belief state $c$ such that $c\Vdash\phi$ and $c$ is \textit{totally continuous} with respect to $b$ (i.e., $b(w) = 0$ implies $c(w) = 0$).
MCE is undefined when the evidence is not $b$-consistent. This is analogous to Bayesian conditioning being undefined for $b(\alpha)=0$. Obviously, this is a limitation of MCE because some belief states may not be considered as candidate revised belief states. Admittedly, we have not searched the literature on this topic due to it being out of the present scope.

As far as we know, imaging for belief change has never been applied to (conditional) probabilistic evidence. Due to issues with many revision methods required to be consistent with prior beliefs, and imaging not having this limitation, it might be worthwhile investigating.

The translation from the set of belief states back to a belief base is a mapping from every belief state to a probability formula. The size of the belief base is thus in the order of $|W^\mathit{perm}|$, where $|W|$ is already exponential in the size of $\Prp$, the set of atoms.
As we saw in several examples in this paper, the new belief base often has a more concise equivalent counterpart. It would be useful to find a way to consistently determine more concise belief bases than our present approach does.

The computational complexity of the process to revise a belief base is at least exponential. This work focused on theoretical issues. If the framework presented here is ever used in practice, computations will have to be optimized.

The following example illustrates how one might deal with strict inequalities.
\begin{example}
\label{ex:8}
Suppose the vocabulary is $\{q,r\}$ and $B_3 = \{(q)> 0.6\}$.
Let $\alpha$ be $(q\land\lnot r)\lor(\lnot q\land r)$.
Let $\epsilon$ be a real number which tends to 0.
Then $\Pi^{B_3}_\mathit{bnd}=$
\begin{align*}
&\{\{(11,1.0),(10,0.0),(01,0.0),(00,0.0)\},\\
&\{(11,0.0),(10,1.0),(01,0.0),(00,0.0)\},\\
&\{(11,0.6+\epsilon),(10,0.0),(01,0.4-\epsilon),(00,0.0)\},\\
&\{(11,0.6+\epsilon),(10,0.0),(01,0.0),(00,0.4-\epsilon)\},\\
&\{(11,0.0),(10,0.6+\epsilon),(01,0.4-\epsilon),(00,0.0)\},\\
&\{(11,0.0),(10,0.6+\epsilon),(01,0.0),(00,0.4-\epsilon)\}\},
\end{align*}
$(\Pi^{B_3}_\mathit{bnd})^\mathsf{GI}_\alpha =$
\begin{align*}
&\{\{(11,0.0),(10,0.5),(01,0.5),(00,0.0)\},\\
&\{(11,0.0),(10,1.0),(01,0.0),(00,0.0)\},\\
&\{(11,0.0),(10,0.3+\epsilon),(01,0.7-\epsilon),(00,0.0)\},\\
&\{(11,0.0),(10,0.6+\epsilon),(01,0.4-\epsilon),(00,0.0)\},\\
&\{(11,0.0),(10,0.8+\epsilon),(01,0.2-\epsilon),(00,0.0)\}\mbox{ and}
\end{align*}
$B^\alpha_{3 \mathit{bnd}} = \{(\phi_{11})\leq0$, $(\phi_{10})\geq0.3+\epsilon$, $(\phi_{01})\leq0.7-\epsilon$, $(\phi_{00})\leq0.0\}$.

Note that if we let $B'=\{((q\land\lnot r)\lor(\lnot q\land r))=1$, $(q\land\lnot r)>0.3\}$, then $\Pi^{B'}=\Pi^{B^\alpha_{3 \mathit{bnd}}}$.
\hfill$\Box$
\end{example}

It has been suggested by one of the reviewers that $\mathsf{GI}$ could be an \textit{affine map} (i.t.o. geometry), thus allowing the proof of Theorem~\ref{th:main} to refer to existing results in the study of affine maps to significantly simplify the proof. The authors are not familiar with affine maps and thus leave investigation of the suggestion to other researchers.

\section{Related Work}

\citet{v99a} proposed the partial probability theory (PTT), which allows probability assignments to be partially determined, and where there is a distinction between probabilistic information based on (i) hard background evidence and (ii) some assumptions. He does not explicitly define the ``constraint language'', however, from his examples and discussions, one can infer that he has something like the language $L^\mathit{PTT}$ in mind:
it contains all formulae which can be formed with sentences in our $L^\mathit{prob}$ in combination with connectives $\lnot, \land$ and $\lor$.
A ``belief state'' in PTT is defined as the quadruple $\langle \Omega,\mathcal{B},\mathcal{A},\mathcal{C}\rangle$, where $\Omega$ is a sample space, $\mathcal{B}\subset L^\mathit{PTT}$ is a sets of probability constraints, $\mathcal{A}\subset L^\mathit{PTT}$ is a sets of assumptions and $\mathcal{C}\subseteq W$ ``represents specific information concerning the case at hand'' (an observation or evidence).\footnote{\citet{v99a}'s ``belief state'' would rather be called and \textit{epistemic state} or \textit{knowledge structure} in our language.} Our epistemic state can be expressed as a restricted PTT ``belief state'' by letting $\Omega=W$, $\mathcal{B}=B$, $\mathcal{A}=\emptyset$ and $\mathcal{C}=\{w\in W\mid w\Vdash\alpha\}$, where $B$ is a belief base and $\alpha$ is an observation in our notation.

\citet{v99a} mentions that he will only consider conditioning where the evidence does not contradict the current beliefs. He defines the set of belief states corresponding to the conditionalized PPT ``belief state'' as $\{b(\cdot\mid C)\in\Pi\mid b\in\Pi^{\mathcal{B}\uni\mathcal{A}}, b(C)>0\}$.
In our notation, this corresponds to $\{(b\:\mathsf{BC}\:\alpha)\in\Pi\mid b\in\Pi^B, b(\alpha)>0\}$, where $\alpha$ corresponds to $C$.

\citet{v99a} proposes \textit{constraining} as an alternative to conditioning: 
Let $\phi\in L^\mathit{prob}$ be a probability constraint.
In our notation, constraining $\Pi^B$ on $\phi$ produces $\Pi^{B\uni\{\phi\}}$.

Note that expanding a belief set reduces the number of models (worlds) and expanding a PPT "belief state" with extra constraints also reduces the number of models (belief states / probability functions).
\begin{quote}
In the context of belief sets, it is possible to obtain any belief state from the ignorant belief state by a series of expansions. In PPT, constraining, but not conditioning, has the analogous property. This is one
of the main reasons we prefer to constraining and not conditioning to be the probabilistic version of expansion. \citep[p.~4]{v99a}
\end{quote}

But Voorbraak does not address the issue that $C$ and $\phi$ are different kinds of observations, so constraining, as defined here, cannot be an alternative to conditioning.
$C$ cannot be used directly for constraining and $\phi$ cannot be used directly for conditioning.

W.l.o.g., we can assume $C$ is represented by $\alpha$.
If we take $b\:\mathsf{GI}\:\alpha$ to be an expansion operation whenever $b(\alpha)>0$, then one might ask, Is it possible to obtain any belief base $B'$ from the ignorant belief base $B=\emptyset$ by a series of expansions, using our approach?
The answer is, No. For instance, there is no observation or series of observations which can change $B = \{\}$ into $B' = \{(q)\geq 0.6\}$.
But if we \textit{were} to allow sentences (constraints) in $L^\mathit{prob}$ to be observations, then we \textit{could} obtain any $B'$ from the ignorant $B$.

\citet{gh98} investigate what ``update'' (incorporation of an observation with current beliefs, such that the observation does not contradict the beliefs) means in a framework where beliefs are represented by a set of belief states. They state that the main purpose of their paper is to illustrate how different the set-of-distributions framework can be, ``technically'', from the standard single-distribution framework.
They propose six postulates characterizing what properties an update function should have. They say that some of the postulates are obvious, some arguable and one probably too strong. Out of seven (families of) update functions only the one based on conditioning ($\mathit{Upd}_\mathit{cond}(\cdot)$) and the one based on constraining ($\mathit{Upd}_\mathit{constrain}(\cdot)$) satisfy all six postulates, where $\mathit{Upd}_\mathit{cond}(\Pi^B,\alpha) := \{(b\:\mathsf{BC}\:\alpha)\in\Pi\mid b\in\Pi^B, b(\alpha)>0\}$ and where they interpret Voorbraak's (\citeyear{v99a}) constraining as $\mathit{Upd}_\mathit{constrain}(\Pi^B,\alpha) := \{b\in\Pi^B\mid b(\alpha)=1\}$.
\citet{gh98} do not investigate the case when an observation must be incorporated while it is (possibly) inconsistent with the old beliefs (i.e., revision).

\citet{k01a} develops a new perspective of probabilistic belief change. Based on the ideas of \citet{agm85} and \citet{km91} (KM),
the operations of revision and update, respectively, are investigated within a probabilistic framework.
She employs as basic knowledge structure a belief base $(b,\mathcal{R})$, where $b$ is a probability distribution (belief state) of background knowledge and $\mathcal{R}$ is a set of probabilistic conditionals of the form $A\leadsto B[x]$ meaning `The probability of $B$, given $A$, is $x$.
A universal inference operation -- based on the techniques of optimum entropy -- is introduced as an ``adequate and powerful method to realize probabilistic belief change''.

By having a belief state available in the belief base, minimum cross-entropy can be used.
The intention is then that an agent with belief base $(b,\mathcal{T})$ should always reason w.r.t. belief state $b^\mathcal{T} := \argmin_{c\in \Pi,c\Vdash\mathcal{T}}R(c,b)$.
\citet{k01a} then defines the probabilistic belief revision of $(b,\mathcal{R})$ by evidence $\mathcal{S}$ as $(b,\mathcal{R}\uni\mathcal{S})$.
%
And the probabilistic belief update of $(b,\mathcal{R})$ by evidence $\mathcal{S}$ is defined as $(b^\mathcal{R},\mathcal{S})$.\footnote{This is a very simplified version of what she presents. Please refer to the paper for details.}
She distinguishes between revision as a knowledge adding process, and updating as a change-recording process.
\citet{k01a} sets up comparisons of maximum cross-entropy belief change with AGM revision and KM update.
Cases where, for update, new information $\mathcal{R}$ is inconsistent with the prior distribution $b$, or, for revision, is inconsistent with $b$ or the context $\mathcal{R}$, are not dealt with \citep[p.~399,~400]{k01a}.

Having a belief state available for modification when new evidence is to be adopted is quite convenient. As \citet{v99a} argues, however, an agent's ignorance can hardly be represented in an epistemic state where a single belief state must always be chosen.

The reader may also refer to a later paper \citep{k08} in which many of the results of the work just reviewed are generalized to belief bases of the form $(\Psi,\mathcal{R})$, where $\Psi$ denotes a general \textit{epistemic state}. In that paper, she considers two instantiations of $\Psi$, namely as a \textit{probability distribution} and as an \textit{ordinal conditional function} (first introduced by \citet{s88}).

\cite{yl08} propose a probabilistic revision operation for imprecise probabilistic beliefs in the framework of Probabilistic Logic
Programming (PLP). New evidence may be a probabilistic (conditional) formula and needs not be consistent with the original beliefs.
Revision via imaging (e.g., $\mathsf{GI}$) also overcomes this consistency issue.

Essentially, their \textit{probabilistic epistemic states} $\Psi$ are induced from a PLP program which is a set of formulae, each formula having the form $(\psi\mid\phi)[l,u]$, meaning that the probability of the conditional $(\psi\mid\phi)$ lies in the interval $[l,u]$.

The operator they propose has the characteristic that if an epistemic state $\Psi$ represents a single probability distribution, revising collapses to Jeffrey's rule and Bayesian conditioning.

They mention that it is required that the models (distributions) of $\Psi$ is a convex set. There might thus be an opportunity to employ their revision operation on a representative set of boundary distributions as proposed in this paper.

\section{Conclusion}

In this paper, we propose an approach how to generate a new probabilistic belief base from an old one, given a new piece of non-probabilistic information, where a belief base is a finite set of sentences, each sentence stating the likelihood of a proposition about the world. In this framework, an agent's belief base represents the set of belief states compatible with the sentences in it. In this sense, the agent is able to represent its knowledge \textit{and} ignorance about the true state of the world.

We used a version of the so-called \textit{imaging} approach to implement the revision operation.

Two methods were proposed: revising a finite set of `boundary belief states' and revising a least biased belief state. We focussed on the former and showed that the latter gives different results.


There were two main contribution of this paper. The first was to prove that the set of belief states satisfying $B_\mathit{new}$ is exactly those belief states satisfying the original belief base, revised. The second was to uncover an interesting conflict in the results of the two belief base revision methods.
It is worth further understanding the reasons behind such a difference, as such an investigation could give more insight about the mechanisms behind the two methods and indicate possible pros and cons of each.

\section{Acknowledgements}
The work of Giovanni Casini has been supported by the Fonds National de la Recherche, Luxembourg, and cofunded by the Marie Curie Actions of the European Commission (FP7-COFUND) (AFR/9181001).

\bibliographystyle{aaai}
\bibliography{references}

\end{document}